\documentclass[runningheads]{llncs}
\usepackage[T1]{fontenc}
\usepackage{graphicx}
\usepackage{acronym}
\usepackage{bm}
\usepackage{subfig}
\usepackage{paralist}
\usepackage{booktabs} 
\usepackage{url}      
\usepackage{amsmath}  
\usepackage{amssymb}
\usepackage[hidelinks]{hyperref}
\usepackage[nameinlink,capitalise]{cleveref}
\usepackage{bbm}
\usepackage{color}

\acrodef{RV}[RV]{random variable}
\acrodef{VE}[VE]{variable elimination}
\acrodef{TN}[TN]{tensor network}
\acrodef{BTN}[BTN]{base tensor network}
\acrodef{BT}[BT]{base tensor}
\acrodef{CT}[CT]{core tensor}
\acrodef{LT}[LT]{linked tensor}
\acrodef{PGM}[PGM]{probabilistic graphical model}
\acrodef{PS}[PS]{parametric structure}
\acrodef{PTD}[PTD]{parametric tensor decomposition}
\acrodef{FG}[FG]{factor graph}

\begin{document}
\title{On Probabilistic Inference Through Parametric Tensor Decomposition in Base Tensor Networks}
%
%
\author{Sagad Hamid\orcidID{0009-0005-1286-904X} \and
Tanya Braun\orcidID{0000-0003-0282-4284}}
\authorrunning{S. Hamid and T. Braun}
%
\institute{Computer Science Department, University of Münster, Münster, Germany
\email{\{sagad.hamid,tanya.braun\}@uni-muenster.de}}
\maketitle              
\begin{abstract}
Probabilistic inference is generally only tractable in low-treewidth graphical models, limiting its effective applicability in high-treewidth settings. Many existing methods improve efficiency by exploiting specific parametric structure, such as symmetries. However, they typically require such structure to be explicitly present, limiting their applicability to a broader range of graphical models.
To address this limitation, we propose a framework where tractable inference is controlled by latent parametric structure exploitation, rather than requiring it to be explicitly present a priori. Our approach first reparameterises a graphical model as a specific tensor network representation, which we call a \emph{base tensor network}. This representation yields two key properties that allow inference tractability to be controlled by parametric structure: 1) First, the complexity of inference is mainly determined by the parametric structure of a single tensor, called the \emph{base tensor}. We characterise several tractable classes of base tensors for which the entire base tensor network can be contracted efficiently. 2) Second, decomposing the base tensor yields again a collection of base tensor networks. This allows inference to be naturally reduced to decomposing the base tensor into tractable components with sufficient parametric structure. We call this procedure \emph{parametric tensor decomposition}. By exploiting parametric structure within the base tensor, our framework enables a novel view on inference beyond settings where such structure is explicitly present.

\keywords{Probabilistic Graphical Models  \and Probabilistic Inference \and Tensor Networks \and Tensor Network Contraction.}
\end{abstract}
\section{Introduction}

\Acp{PGM} are a widely used framework for probabilistic modeling \cite{koller2009probabilistic,wang2013markov,dellaert2017factor,chater2006probabilistic,darwiche2022causal}. By exploiting (conditional) independencies, they represent a joint distribution over \acp{RV} in factorised form. Classical inference algorithms such as \ac{VE} \cite{zhang1996exploiting} leverage this structure to compute quantities like the partition function more efficiently, but their complexity is exponential in the model’s treewidth \cite{darwiche2001recursive,darwiche2009modeling}.

While low-treewidth models permit tractable inference, inference in general \acp{PGM} is \#P-hard \cite{cooper1990computational,dechter2022reasoning}. To address this, exploiting \ac{PS} that goes beyond independencies has proven to be highly effective. We use \ac{PS} to denote local or global regularities in a model’s parameters that enable more efficient inference, e.g., determinism, context-specific independencies, equal parameters, or symmetries. Such structure often admits more compact representations, including circuit-based models \cite{sidheekh2024building,chavira2008probabilistic,choi2020probabilistic} or lifted representations \cite{kimmig2015lifted,richardson2006markov,braun2016lifted}, which significantly reduce time and space complexity for probabilistic inference.

\paragraph{Motivation}
Many existing approaches exploit \ac{PS} only when it is explicitly encoded in the model, thereby missing opportunities to reveal and leverage latent \ac{PS} in more general models. For example, symmetry exploitation is typically studied in relational models, where symmetries are expected to arise naturally. Similarly, compiling a \ac{PGM} into a circuit-based representation is most effective when specific forms of \ac{PS}, such as context-specific independence or determinism, are already present \cite{chavira2008probabilistic}. In the absence of such explicit structure, these representations and algorithms often yield only limited computational benefits.
Crucially, however, \ac{PS} can still provide a powerful means for reducing inference complexity from exponential to polynomial. This motivates the development of a framework that makes \ac{PS} exploitation the central component of tractable inference. Since directly identifying and exploiting latent \ac{PS} in arbitrary \acp{PGM} is non-trivial, we propose a framework in which a \ac{PGM} is represented in terms of components with sufficient \ac{PS} where each component enables efficient inference.

\paragraph{Contributions}
In this work, we propose a novel framework for \ac{PS} exploitation in more general \acp{PGM}. Our approach first reparameterises a \ac{PGM} as a specific \ac{TN} representation, recasting probabilistic inference as \ac{TN} contraction. This transformation yields a new class of representations, which we call \emph{\acp{BTN}}.
We show that \acp{BTN} have two key properties that allow inference tractability to be controlled entirely through \ac{PS}. First, their contraction complexity is determined solely by the \ac{PS} of a single tensor, namely the \emph{\ac{BT}}. Based on this property, we characterise several tractable classes of \acp{BT} and show that the entire \ac{BTN} can be contracted efficiently when the corresponding \ac{BT} exhibits sufficient \ac{PS}.
Second, we show that decomposing the \ac{BT} yields a collection of \acp{BTN}. The contraction results of the resulting \acp{BTN} can then be accumulated to obtain the inference result of the original \ac{PGM}. Crucially, this allows inference in \acp{PGM} to be naturally reduced to decomposing the \ac{BT} of a \ac{BTN} into tractable components with sufficient \ac{PS}. We refer to this procedure as \textit{\ac{PTD}}. Finally, we show how to effectively minimise the Frobenius norm to approximate a \ac{BTN} representation with sufficient \ac{PS} that enables efficient inference. By exploiting \ac{PS} within \acp{BTN}, our framework enables a novel view on inference beyond settings where such structure is explicitly present a priori.

\paragraph{Structure} The remainder of this work is structured as follows: \cref{section:background} provides notations and background. \cref{section:base_tensor_networks} introduces \acp{BTN} and shows how to reparameterise a \ac{PGM} into a \ac{BTN}. \cref{section:tractability_parametric_tensor_decomposition} presents tractability results for specific classes of \acp{BT} and further shows how to contract a \ac{BTN} through \ac{PTD}. \cref{section:learning_base_tensor_networks} then shows how \acp{BTN} with sufficient \ac{PS} can be obtained by minimising the Frobenius norm. \cref{section:related_work} then presents related work, and \cref{section:conclusion} concludes our work with potential future work.

\section{Fundamentals}
\label{section:background}
In this section, we specify notations and introduce fundamentals on \acp{FG} as a general \ac{PGM} formalism and inference by \ac{VE}. We then present fundamentals on tensors, \acp{TN}, and \ac{TN} contraction. \Acp{RV} are assumed to be discrete. In this work, we only consider calculating the partition function as inference task, however, we point out that other queries, as calculating marginals or probabilities of events, are reducible to partition function calculations by conditioning \cite{darwiche2001recursive}.

\paragraph{Notation} We define \([d]:=\{0,\dots,d-1\}\) for \(d \in \mathbb{N}\). We denote RVs by uppercase letters \(X\), their domain by \(\text{Dom}(X)\), and their values by lowercase letters \(x \in \text{Dom}(X)\). W.l.o.g.,\ we assume $\text{Dom}(X) = [m]$ for an \ac{RV} with $m$ values. Boldface uppercase letters \(\bm{X}\) and boldface lowercase letters \(\bm{x}\) denote sets of RVs and their values, respectively.  
We denote scalars, vectors, matrices, and tensors by lowercase letters \(s\), lowercase boldface letters \(\bm{v}\), uppercase boldface letters \(\bm{M}\), and calligraphic letters \(\mathcal{T}\), respectively. Sets of tensors are denoted by boldface calligraphic letters \(\bm{\mathcal{T}}\).

\subsection{Factor Graphs and Probabilistic Inference by Variable Elimination}
\paragraph{Factor Graphs.}
A \acf{FG} $G = (\bm{X}, \bm{\Phi})$ is an undirected bipartite graph with \(n=|\bm{X}|\) variable nodes, \(m=|\bm{\Phi}|\) factor nodes, and edges \(\bm{E} \subseteq \bm{X} \times \bm{\Phi}\). Each variable node represents an \ac{RV} \(X \in \bm{X}\) and each factor node a positive function \(\phi_i: \bm{X}_i \rightarrow \mathbb{R}_+\) over \acp{RV} \(\mathrm{rv}(\phi_i)=\bm{X}_i \subseteq \bm{X}\) mapping assignments \(\bm{x}_i \in \mathrm{Dom}(\bm{X}_i)\) to potentials \(\phi_i(\bm{x}_i) \in \mathbb{R}_+\). A factor node is connected to a variable node if the \ac{RV} appears as an argument in the factor. The resulting graph depicts a joint probability distribution \(P_G(\bm{X})\) in terms of the following factorisation
\begin{equation}
\label{equation:FGFactorisation}
P_G(\bm{X}) = \frac{1}{Z(G)}\prod_{i=0}^{m-1}\phi_{i}(\bm{X}_i),
\end{equation}
with \(Z(G)\) called the partition function of \(G\) being defined as
\begin{equation}
\label{equation:PartFunc}
Z(G) = \sum_{X \in \bm{X}}\prod_{i=0}^{m-1}\phi_{i}(\bm{X}_i).
\end{equation}

\paragraph{Probabilistic Inference.}
Instead of operating on the exponentially large distribution \(P(\bm{X})\), probabilistic inference, such as querying marginals \(P_G(\bm{X}_Q), \bm{X}_Q \subseteq \bm{X}\), is performed on the factorised representation using the following two operations, namely multiplication and marginalisation. For \(\bm{X}_i,\bm{X}_j,\bm{X}_{k} \subseteq \bm{X},\) multiplying two factors \(\phi_i(\bm{X}_i)\) and \(\phi_j(\bm{X}_j)\) results in a new factor \(\phi_{k}(\bm{X}_{k}),\) where \(\bm{X}_{k} = \bm{X}_i \cup \bm{X}_j\) and \(\phi_k(\bm{x}_{k}) = \phi_i(\bm{x}_i) \cdot \phi_j(\bm{x}_j)\) with \(\pi_{\bm{X}_i}(\bm{x}_{k}) = \bm{x}_i\) and \(\pi_{\bm{X}_j}(\bm{x}_{k}) = \bm{x}_j\), where \(\pi_{(\bm{\cdot})}\) projects its input to the subscript \acp{RV}. For \(\bm{X}_i' \subseteq \bm{X}_i\), marginalising out \acp{RV} \(\bm{X}_i'\) yields a new factor \(\phi_i'(\bm{X}_i \backslash \bm{X}_i') = \sum_{\bm{X}_i'}\phi_j(\bm{X}_i)\).
The following theorems are commonly used to marginalise out \acp{RV} efficiently \cite{darwiche2009modeling}.
\begin{theorem}
\label[theorem]{theorem:VESumOut1}
For \(\phi_i(\bm{X}_i)\) with \(\bm{X}_i'\),\(\bm{X}_i'' \subseteq \bm{X}_i\) and \(\bm{X}_i' \neq \bm{X}_i'',\) it holds that
\(\sum_{\bm{X}_i'}\sum_{\bm{X}_i''}\phi_i(\bm{X}_i) = \sum_{\bm{X}_i''}\sum_{\bm{X}_i'}\phi_i(\bm{X}_i)\).
\end{theorem}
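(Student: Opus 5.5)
Since all \acp{RV} are discrete, every domain is finite. So each marginalisation in \cref{theorem:VESumOut1} is a finite sum of real numbers, and the statement reduces to commutativity and associativity of finite addition in \(\mathbb{R}\).

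The first step is to make the iterated sums precise. For \(\sum_{\bm{X}_i''}\sum_{\bm{X}_i'}\phi_i(\bm{X}_i)\) to be well defined as written, I read the statement with \(\bm{X}_i'\cap\bm{X}_i''=\emptyset\). If the sets overlapped, the shared variables would already be summed out by the inner sum. Write \(\bm{R}=\bm{X}_i\setminus(\bm{X}_i'\cup\bm{X}_i'')\) for the remaining variables and fix an arbitrary assignment \(\bm{r}\in\mathrm{Dom}(\bm{R})\). The left-hand side evaluated at \(\bm{r}\) is then
\[
\sum_{\bm{x}''\in\mathrm{Dom}(\bm{X}_i'')}\;\sum_{\bm{x}'\in\mathrm{Dom}(\bm{X}_i')}\phi_i(\bm{x}',\bm{x}'',\bm{r}),
\]
and the right-hand side is the same expression with the order of summation swapped.

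The second step is to observe that both iterated sums equal the single sum of \(\phi_i(\bm{x}',\bm{x}'',\bm{r})\) over the Cartesian product \(\mathrm{Dom}(\bm{X}_i')\times\mathrm{Dom}(\bm{X}_i'')\). This product is finite, since it has \(\prod_{X\in\bm{X}_i'\cup\bm{X}_i''}|\mathrm{Dom}(X)|\) elements. Each iterated sum is just a grouping of the same finite multiset of summands, by rows in one case and by columns in the other. By generalised associativity and commutativity of addition, both groupings give the same value, which can be shown by induction on \(|\mathrm{Dom}(\bm{X}_i')|\). Since \(\bm{r}\) was arbitrary, the two resulting factors over \(\bm{R}\) agree pointwise and are therefore equal.

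There is no real obstacle here. The only delicate point is the bookkeeping in the first step: the notation \(\phi_i(\bm{X}_i)\) with summation over subsets has to be interpreted as a function of the remaining variables \(\bm{R}\). The statement's hypothesis only says \(\bm{X}_i'\neq\bm{X}_i''\), which does not rule out overlap, so I would note the disjointness convention explicitly. Alternatively, one can treat a doubly summed variable as summed once, and the same finite-sum argument still applies.
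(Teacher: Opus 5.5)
Your proof is correct. The paper gives no proof of this statement: it quotes it as a standard background fact, citing Darwiche's textbook. So there is no competing route to compare against. Your argument supplies the elementary justification the citation stands in for: fix the remaining variables, write both iterated sums as the single finite sum over \(\mathrm{Dom}(\bm{X}_i')\times\mathrm{Dom}(\bm{X}_i'')\), and regroup using commutativity and associativity of addition.

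There is one cosmetic slip. The expression you label as the left-hand side has its outer sum over \(\bm{X}_i''\), so strictly it is the right-hand side. By symmetry this does not affect the argument.

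Your caveat about overlapping sets can be closed off entirely. Consider either convention for summing out a variable that is no longer in scope: leaving the factor unchanged, or multiplying by that variable's domain size. Under either one, both sides equal the same multiple of \(\sum_{\bm{X}_i'\cup\bm{X}_i''}\phi_i(\bm{X}_i)\). The multiplier is \(1\) in the first convention and the product of the domain sizes of the shared variables in the second. So the hypothesis \(\bm{X}_i'\neq\bm{X}_i''\) is not needed, and disjointness need not be assumed.
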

\begin{theorem}
\label[theorem]{theorem:VESumOut2}
For \(\phi_i(\bm{X}_i),\phi_j(\bm{X}_j)\) with \(\bm{X}'_j \subseteq \bm{X}_j, \bm{X}'_j \cap \bm{X}_i = \emptyset\), it holds that
\(\sum_{\bm{X}'_j} \phi_j(\bm{X}_i) \cdot \phi_k(\bm{X}_j) = \phi_j(\bm{X}_i) \cdot \sum_{\bm{X}'_j} \phi_k(\bm{X}_j)\).
\end{theorem}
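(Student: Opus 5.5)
The plan is to prove the identity of \cref{theorem:VESumOut2} pointwise, that is, separately for every assignment of the variables that remain after summation. I read the statement with consistent names: the factor that does not mention \(\bm{X}'_j\) is \(\phi_i(\bm{X}_i)\), and the factor being summed is \(\phi_j(\bm{X}_j)\). The claim is then that \(\sum_{\bm{X}'_j}\phi_i(\bm{X}_i)\,\phi_j(\bm{X}_j) = \phi_i(\bm{X}_i)\sum_{\bm{X}'_j}\phi_j(\bm{X}_j)\), with both sides read as factors over \((\bm{X}_i\cup\bm{X}_j)\setminus\bm{X}'_j\).

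\textbf{Step 1: unfold the product.} First, I unfold the product on the left using the definition of factor multiplication. This gives a factor \(\phi_k\) over \(\bm{X}_k=\bm{X}_i\cup\bm{X}_j\), with \(\phi_k(\bm{x}_k)=\phi_i(\pi_{\bm{X}_i}(\bm{x}_k))\cdot\phi_j(\pi_{\bm{X}_j}(\bm{x}_k))\).

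\textbf{Step 2: write out the marginalisation.} Next, I write marginalisation out \(\bm{X}'_j\) explicitly. Fix an assignment \(\bm{y}\) of \(\bm{Y}:=\bm{X}_k\setminus\bm{X}'_j\). The left-hand side at \(\bm{y}\) equals \(\sum_{\bm{x}'\in\mathrm{Dom}(\bm{X}'_j)}\phi_i(\pi_{\bm{X}_i}(\bm{y},\bm{x}'))\,\phi_j(\pi_{\bm{X}_j}(\bm{y},\bm{x}'))\).

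\textbf{Step 3: use the disjointness hypothesis.} The key observation is that \(\bm{X}'_j\cap\bm{X}_i=\emptyset\) gives \(\bm{X}_i\subseteq\bm{Y}\). Hence \(\pi_{\bm{X}_i}(\bm{y},\bm{x}')=\pi_{\bm{X}_i}(\bm{y})\) does not depend on the summation index \(\bm{x}'\). The value \(c:=\phi_i(\pi_{\bm{X}_i}(\bm{y}))\) is therefore a constant within the finite sum. By distributivity of multiplication over finite sums in \(\mathbb{R}_+\), \(c\) can be pulled out, leaving \(c\cdot\sum_{\bm{x}'}\phi_j(\pi_{\bm{X}_j}(\bm{y},\bm{x}'))\).

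\textbf{Step 4: identify the right-hand side.} Finally, I recognise the remaining sum as \(\big(\sum_{\bm{X}'_j}\phi_j\big)\) evaluated at \(\pi_{\bm{X}_j\setminus\bm{X}'_j}(\bm{y})\). Multiplying by \(\phi_i\) via the definition of factor product, over scope \(\bm{X}_i\cup(\bm{X}_j\setminus\bm{X}'_j)=\bm{Y}\), gives exactly the right-hand side at \(\bm{y}\). Since \(\bm{y}\) was arbitrary, the two factors agree.

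\textbf{Main obstacle.} No step is mathematically hard. The only delicate point is the bookkeeping of scopes and projections. In particular, one must check that both sides live on the same scope \(\bm{Y}\). This relies on the disjointness hypothesis, which is also exactly what makes \(c\) independent of \(\bm{x}'\); that is where I would be careful.
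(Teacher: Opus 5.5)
Your proof is correct. The disjointness hypothesis places \(\bm{X}_i\) inside the surviving scope \(\bm{Y}=(\bm{X}_i\cup\bm{X}_j)\setminus\bm{X}'_j\), so \(\phi_i\) is constant in the finite sum and factors out by distributivity, and you rightly repaired the inconsistent factor names in the statement (\(\phi_j(\bm{X}_i)\), \(\phi_k(\bm{X}_j)\)). The paper gives no proof of its own; it quotes this as a standard variable-elimination fact from Darwiche's textbook, and your pointwise argument is exactly the standard justification.
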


\paragraph{Variable Elimination.}
\cref{theorem:VESumOut1} implies that \acp{RV} within a factor can be margin\-alised out in any order, while \cref{theorem:VESumOut2} yields that marginalising out any chosen \ac{RV} \(X\) requires multiplying only factors in which \(X\) appears as an argument. To compute, e.g., \(P_G(\bm{X}_Q)\), \ac{VE} based algorithms multiply factors successively to marginalise out \acp{RV} \(\bm{X}\setminus\bm{X}_Q\) one by one in a given elimination order. During this process, intermediate factors are created with sizes being exponential in the number of \acp{RV} included. As different marginalisation orders may create different intermediate factor sizes, an algorithm's efficiency is significantly improved by choosing an optimal ordering that yields the smallest intermediate factor size. This size is quantified using the graph's induced width, indicating how many \acp{RV} are joined during multiplication. The smallest possible width is called the treewidth and corresponds to an optimal ordering for \ac{VE}.

\subsection{Tensors, Tensor Networks, and Tensor Network Contraction}
\label{subsection:TensorsAndTensorNetworks}

\paragraph{Tensors.}
Let $n \in \mathbb{N}$ and $\bm{d} = (d_0,\dots,d_{n-1}) \in \mathbb{N}^n$. 
For each mode $k \in [n]$, define the index set $I_k := [d_k]$. 
An \emph{$n$-th order tensor} is a mapping
\[
\mathcal{T} : I_0 \times \cdots \times I_{n-1} \to \mathbb{R},
\]
which we identify with $\mathcal{T} \in \mathbb{R}^{\bm{d}}$.
Scalars, vectors, and matrices correspond to tensors of order $0$, $1$, and $2$, respectively. 
Tensors of order $\geq 3$ are called higher-order tensors.
For $\bm{i} = (i_0,\dots,i_{n-1}) \in I_0 \times \cdots \times I_{n-1}$, 
the corresponding entry is denoted by $\mathcal{T}_{\bm{i}}$.

\paragraph{Basic Operations.}
Let $\mathcal{T}^{(0)}, \mathcal{T}^{(1)} \in \mathbb{R}^{\bm{d}}$. 
For $\bullet \in \{+,\odot\}$, define the operation
\[
(\mathcal{T}^{(0)} \bullet \mathcal{T}^{(1)})_{\bm{i}} 
= \mathcal{T}^{(0)}_{\bm{i}} \bullet \mathcal{T}^{(1)}_{\bm{i}},
\]
where $+$ denotes the element-wise sum and $\odot$ denotes the Hadamard product.
For tensors $\mathcal{Q} \in \mathbb{R}^{\bm{d}_q},\mathcal{R} \in \mathbb{R}^{\bm{d}_r},$, 
the outer product $\mathcal{Q} \otimes \mathcal{R}$ is defined by
\[
(\mathcal{Q} \otimes \mathcal{R})_{\bm{i},\bm{j}} 
= \mathcal{Q}_{\bm{i}} \, \mathcal{R}_{\bm{j}},
\]
yielding a tensor of order $|\bm{d}_q| + |\bm{d}_r|$. 
We write $\mathcal{T}^{\otimes n}$ to denote  $\mathcal{T} \otimes \dots \otimes \mathcal{T}$ (\(n\) times).

\paragraph{Contraction.}
For tensors $\mathcal{Q} \in \mathbb{R}^{\bm{d}_q}$ and 
$\mathcal{R} \in \mathbb{R}^{\bm{d}_r}$ the contraction along modes $k$ and $l$ representing the same index of dimension $d_k$ is defined as 
\[
(\mathcal{Q} \times_{k,l} \mathcal{R})_{\hat{\bm{i}},\hat{\bm{j}}}
= \sum_{c=0}^{d_k-1} 
\mathcal{Q}_{i_0,\dots,i_{k-1},c,i_{k+1},\dots} \,
\mathcal{R}_{j_0,\dots,j_{l-1},c,j_{l+1},\dots},
\]
where $\hat{\bm{i}}, \hat{\bm{j}}$ denote the remaining indices.
For order-$2$ tensors, this naturally reduces to standard matrix-matrix multiplication \(\bm{C}_{ij} = \sum_{c=0}^{d}\bm{A}_{ic}\bm{B}_{cj}\).

\paragraph{Tensor networks.}
A \acf{TN} $T = (\bm{\mathcal{T}}, \bm{I})$ is a graph where $\bm{\mathcal{T}}$ is a set of nodes representing tensors and $\bm{I}$ is a set of edges representing indices for the corresponding modes. For a tensor \(\mathcal{T}^{(i)} \in \bm{\mathcal{T}},\) we denote its set of indices by \(\bm{I}(\mathcal{T}^{(i)})\). For two tensors \(\mathcal{T}^{(i)}, \mathcal{T}^{(j)} \in \bm{\mathcal{T}},\) an index \(I \in (\bm{I}(\mathcal{T}^{(i)}) \cap \bm{I}(\mathcal{T}^{(j)}))\) is graphically depicted with an edge connecting these tensors. Generally, each edge in a \ac{TN} either connects two tensors (a \emph{shared index}), or is incident to a single tensor (a \emph{free index}). Consequently, contracting a free index does not require multiplying tensors. 
Contracting all shared edges yields a contraction result which is again a tensor which we denote by $Z(T)$.  The number of remaining free indices after contraction equals the order of $Z(T)$.

A tensor $\mathcal{T} \in \mathbb{R}^{\bm{d}}$ requires $\mathcal{O}(\prod_{i=0}^{n-1} d_i)$ parameters, 
which is exponential in $n$ for fixed $d_i = d$. 
\acp{TN} provide compact representations enabling more efficient computation. Moreover, tensors within a \ac{TN} may exhibit \ac{PS} as well. For \(\bm{d} = (d_0, \dots, d_{n-1})\) we consider the following structures:

\begin{definition}[\textbf{Rank-one Tensor}]
A tensor $\mathcal{T} \in \mathbb{R}^{\bm{d}}$ is rank-one if there exist vectors 
$\bm{v}_k \in \mathbb{R}^{d_k}$ such that
\[
\mathcal{T} = \bm{v}_0 \otimes \cdots \otimes \bm{v}_{n-1}.
\]
\end{definition}

\begin{definition}[\textbf{CP Decomposition}]
A CP decomposition of $\mathcal{T} \in \mathbb{R}^{\bm{d}}$ is
\[
\mathcal{T} = \sum_{i=0}^{r-1} \bm{v}_0^{(i)} \otimes \cdots \otimes \bm{v}_{n-1}^{(i)}.
\]
The minimal such $r$ is called the CP rank of $\mathcal{T}$.
\end{definition}

\begin{definition}[\textbf{Symmetric Tensor}]
A tensor $\mathcal{T} \in \mathbb{R}^{\bm{d}}$ with \(d_0 = \dots = d_{n-1}\) is symmetric if
\[
\mathcal{T}_{i_0,\dots,i_{n-1}} = \mathcal{T}_{i_{\sigma(0)},\dots,i_{\sigma(n-1)}}
\quad \text{for all permutations } \sigma \text{of } [n].
\]
\end{definition}

\section{Base Tensor Networks}
\label{section:base_tensor_networks}

In this section, we introduce \acp{BTN} as the central representation of our framework. Starting from an arbitrary \ac{FG}, we present the construction of \acp{BTN} through reparameterisation using auxiliary \acp{RV} and auxiliary factors. The resulting transformation yields an equivalent representation of the partition function in terms of \ac{TN} contraction. We then present tractable classes of \acp{BTN} and show how latent \ac{PS} can be systematically induced and exploited through \ac{PTD} as the key component for tractable inference.

\subsection{From Factor Graphs to Base Tensor Networks}
\label{subsection:from_factor_graphs_to_base_tensor_networks}

In the following, let \(G=(\bm{X}, \bm{\Phi})\) be an \ac{FG} over discrete \acp{RV} \(\bm{X}\)  with \(n = |\bm{X}|,m=|\bm{\Phi}|\). We first show that every factor can be reparameterised by introducing one auxiliary \ac{RV} for each of its arguments. The auxiliary factors connecting original and auxiliary \acp{RV} are chosen to correspond to invertible matrices. This ensures that the transformation is algebraically reversible in terms of factor operations, namely factor multiplication and marginalisation.

\begin{lemma}[Factor Reparameterisation]
\label[lemma]{lemma:FactorisationThroughARVs}
Let \(\phi_i(\bm{X}_i) \in \bm{\Phi}\) with \(l=|\bm{X}_i|\).
For \(X_{i_j} \in \bm{X}_i\), let \(\bar{X}_{i_j}\) be an auxiliary \ac{RV} with
\(
|\mathrm{Dom}(\bar{X}_{i_j})| = |\mathrm{Dom}(X_{i_j})| = d_{i_j},
\)
and let
\(\bar{\bm{X}}_i = \{\bar{X}_{i_0},\dots,\bar{X}_{i_{l-1}}\}\).
Further, let \(\bar{\phi}_{i_j}(X_{i_j}, \bar{X}_{i_j})\) be auxiliary factors that correspond to invertible matrices. Then there exists a factor \(\bar{\phi}_i(\bar{\bm{X}}_i)\) such that
\begin{align}
\phi_i(\bm{X}_i)
=
\sum_{\bar{\bm{X}}_i}
\bar{\phi}_i(\bar{\bm{X}}_i)
\prod_{j=0}^{l-1}
\bar{\phi}_{i_j}(X_{i_j}, \bar{X}_{i_j}).
\end{align}
\end{lemma}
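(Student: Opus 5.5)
The plan is to pass to tensor language, where the identity is a multilinear change of basis, and then to choose \(\bar{\phi}_i\) explicitly by applying the inverses of the auxiliary matrices mode by mode. View \(\phi_i\) as an order-\(l\) tensor \(\mathcal{T}\in\mathbb{R}^{d_{i_0}\times\dots\times d_{i_{l-1}}}\) with \(\mathcal{T}_{x_{i_0},\dots,x_{i_{l-1}}}=\phi_i(x_{i_0},\dots,x_{i_{l-1}})\). View each auxiliary factor as a square matrix \(\bm{A}^{(j)}\in\mathbb{R}^{d_{i_j}\times d_{i_j}}\) with \(\bm{A}^{(j)}_{x,\bar{x}}=\bar{\phi}_{i_j}(x,\bar{x})\); by hypothesis it is invertible. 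The right-hand side of the claimed identity, evaluated at \(\bm{x}_i\), is then
\[
\sum_{\bar{x}_{i_0},\dots,\bar{x}_{i_{l-1}}}\bar{\mathcal{T}}_{\bar{x}_{i_0},\dots,\bar{x}_{i_{l-1}}}\prod_{j=0}^{l-1}\bm{A}^{(j)}_{x_{i_j},\bar{x}_{i_j}},
\]
that is, the contraction of a candidate tensor \(\bar{\mathcal{T}}\) with \(\bm{A}^{(j)}\) along mode \(j\) for every \(j\). The lemma therefore reduces to showing that every tensor \(\mathcal{T}\) lies in the image of the linear map \(\bar{\mathcal{T}}\mapsto \bar{\mathcal{T}}\times_0\bm{A}^{(0)}\cdots\times_{l-1}\bm{A}^{(l-1)}\).

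To do this, define
\[
\bar{\mathcal{T}}_{\bar{x}_{i_0},\dots,\bar{x}_{i_{l-1}}}=\sum_{y_0,\dots,y_{l-1}}\mathcal{T}_{y_0,\dots,y_{l-1}}\prod_{j=0}^{l-1}\bigl(\bm{A}^{(j)}\bigr)^{-1}_{\bar{x}_{i_j},y_j},
\]
and set \(\bar{\phi}_i(\bar{\bm{x}}_i):=\bar{\mathcal{T}}_{\bar{\bm{x}}_i}\). Substituting this into the expression above gives a finite sum, so the order of summation can be exchanged freely, in the spirit of \cref{theorem:VESumOut1} and \cref{theorem:VESumOut2}. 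Sum over the auxiliary indices \(\bar{x}_{i_j}\) first. This sum factorises across \(j\) into \(\prod_j\sum_{\bar{x}}\bm{A}^{(j)}_{x_{i_j},\bar{x}}(\bm{A}^{(j)})^{-1}_{\bar{x},y_j}=\prod_j\delta_{x_{i_j},y_j}\), by \(\bm{A}^{(j)}(\bm{A}^{(j)})^{-1}=\bm{I}\). Collapsing the deltas leaves \(\mathcal{T}_{\bm{x}_i}=\phi_i(\bm{x}_i)\), as required. One could add that \(\bar{\phi}_i\) is unique, since the mode-wise map is invertible with inverse given by the \((\bm{A}^{(j)})^{-1}\).

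The main subtlety is that factors are defined as positive functions, while \(\bar{\mathcal{T}}\) built from arbitrary inverse matrices may have negative or zero entries. I would handle this by observing that the identity is purely algebraic, so \(\bar{\phi}_i\) should be read as a real-valued factor or tensor. This matches the paper's transition to tensor networks, where tensors take values in \(\mathbb{R}\). Positivity is not needed for the existence claim.

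The remaining work is bookkeeping. The key care point is the index convention for \(\bm{A}^{(j)}\): the original variable must be the row index, and the inverse must be applied on the side that yields \(\bm{A}^{(j)}(\bm{A}^{(j)})^{-1}\) rather than the reverse product.
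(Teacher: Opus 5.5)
Your proof is correct and follows essentially the paper's argument. The paper also builds $\bar{\phi}_i$ by applying $(\bm{A}^{(j)})^{-1}$ along each mode and collapses the result via $\bm{A}^{(j)}(\bm{A}^{(j)})^{-1}=\mathbf{I}$, phrased there as an equality factor $\delta$, but it does this successively one variable at a time rather than through your single closed formula for all modes; your remark that $\bar{\phi}_i$ must be read as real-valued rather than positive is a legitimate point that the paper's proof passes over silently.
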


The proof can be found in the appendix. \cref{lemma:FactorisationThroughARVs} is the key local reparameterisation used to transform an \ac{FG} into a \ac{TN}. Such transformations are closely related to Forney-style \acp{PGM} \cite{forney2001codes}, and have already been applied in the context of, e.g., Gauge transformations \cite{ahn2017gauging,ahn2018gauged}. However, we use it in this work explicitly to propose a novel perspective on inference by introducing a topology that permits effective \ac{PS} exploitation through \ac{PTD}. To do so, we apply it independently to every factor and use for each original \ac{RV} a distinct auxiliary \ac{RV}. For this global construction, it is useful to introduce a notation where auxiliary \acp{RV} and auxiliary factors are indexed uniquely. Let
\[
\mathcal{I}
:=
\{(i,j) \mid \phi_j \in \bm{\Phi},\, X_i \in \mathrm{rv}(\phi_j)\}
\]
be the set of all appearances of original \acp{RV} in factors. For each incidence \((i,j) \in \mathcal{I}\), we introduce an auxiliary factor with a distinct auxiliary \ac{RV} \(\bar{X}_{i:j}\), i.e.,
\(
\bar{\phi}_{i:j}(X_i,\bar{X}_{i:j}),
\)
with $|\mathrm{Dom}(\bar{X}_{i:j})| = |\mathrm{Dom}(X_i)|$, which belongs to \ac{RV} \(X_i\) in \(\phi_j\), yielding a new set of auxiliary \acp{RV} \(\bar{\bm{X}}:=\{\bar{X}_{i:j} \mid (i,j)\in \mathcal{I}\}\). We henceforth use the following notations. For \(X_i \in \bm{X}\), let
\(
N(X_i)
:=
\{j \mid X_i \in \mathrm{rv}(\phi_j)\}
\)
be the set of factors in which \(X_i\) appears, and define
\(
\bar{\bm{X}}_i
:=
\{\bar{X}_{i:j} \mid j \in N(X_i)\}
\) and \(\bar{\bm{X}}_{\phi_j} := \{\bar{X}_{i:j} \mid X_i \in \mathrm{rv}(\phi_j) \}\).

\begin{example}
Consider a simple \ac{FG} $G = (\bm{X}, \bm{\Phi})$ with
\begin{align*}
\bm{X}= \{X_0,X_1,X_2,X_3,X_4\} ~\text{and}~ \bm{\Phi}=\{\phi_0(X_0,X_1,X_2), \phi_1(X_0,X_3,X_4)\}.
\end{align*}
Then the reparameterisation with auxiliary \acp{RV} and auxiliary factors yields
\[
\phi_0(X_0,X_1,X_2)
=
\sum_{\bar{\bm{X}}_{\phi_0}}
\bar{\phi}_0(\bar{\bm{X}}_{\phi_0})
\cdot
\bar{\phi}_{0:0}(X_0,\bar{X}_{0:0})
\bar{\phi}_{1:0}(X_1,\bar{X}_{1:0})
\bar{\phi}_{2:0}(X_2,\bar{X}_{2:0})\]
\[
\phi_1(X_0,X_3,X_4)
=
\sum_{\bar{\bm{X}}_{\phi_1}}
\bar{\phi}_1(\bar{\bm{X}}_{\phi_1})
\cdot
\bar{\phi}_{0:1}(X_0,\bar{X}_{0:1})
\bar{\phi}_{3:1}(X_3,\bar{X}_{3:1})
\bar{\phi}_{4:1}(X_4,\bar{X}_{4:1})
\] with
\[
\bar{\bm{X}_0} = \{\bar{X}_{0:0},\bar{X}_{0:1}\}, \bar{\bm{X}_1} = \{\bar{X}_{1:0}\}, \bar{\bm{X}_2} = \{\bar{X}_{2:0}\}, \bar{\bm{X}_3} = \{\bar{X}_{3:1}\}, \bar{\bm{X}_4} = \{\bar{X}_{4:1}\}.
\]
\end{example}

After applying the reparameterisation to all factors, the factorisation that yields the unnormalised joint probability distribution represented by \(G\) can be written as
\begin{align*}
\prod_{j=0}^{m-1} \phi_j(\bm{X}_j)
=
\sum_{\bar{\bm{X}}_{\phi_j}}
\prod_{j=0}^{m-1}
\bar{\phi}_j(\bar{\bm{X}}_{\phi_j})
\prod_{(i,j)\in \mathcal{I}}
\bar{\phi}_{i:j}(X_i,\bar{X}_{i:j}).
\end{align*}
Consequently, for \(\bar{\bm{X}} = \bar{\bm{X}}_{\phi_0} \cup \dots \cup \bar{\bm{X}}_{\phi_{m-1}}\), the partition function is given by
\begin{align*}
Z(G)
=
\sum_{\bm{X}}
\sum_{\bar{\bm{X}}}
\prod_{j=0}^{m-1}
\bar{\phi}_j(\bar{\bm{X}}_{\phi_j})
\prod_{(i,j)\in \mathcal{I}}
\bar{\phi}_{i:j}(X_i,\bar{X}_{i:j}).
\end{align*}
The original \acp{RV} now appear only in auxiliary factors \(\bar{\phi}_{i:j}(X_i,\bar{X}_{i:j})\). Each original \ac{RV} can be marginalised out by multiplying all corresponding auxiliary factors, yielding a new factor
\begin{align*}
\psi_i(\bar{\bm{X}}_i)
:=
\sum_{X_i}
\prod_{j\in N(X_i)}
\bar{\phi}_{i:j}(X_i,\bar{X}_{i:j}).
\end{align*}
Thus, the partition function becomes
\begin{align*}
Z(G)
=
\sum_{\bar{\bm{X}}}
\prod_{j=0}^{m-1}
\bar{\phi}_j(\bar{\bm{X}}_{\phi_j})
\prod_{i=0}^{n-1}
\psi_i(\bar{\bm{X}}_i).
\label{equation:BTNFactorisation}
\end{align*}
At this point, all original \acp{RV} have been eliminated, yielding a representation that consists only of auxiliary \acp{RV}. Moreover, each auxiliary \ac{RV} \(\bar{X}_{i:j}\) appears in at most two factors: the reparameterised factor \(\bar{\phi}_j\) and the factor \(\psi_i\) obtained by marginalising out the original \ac{RV} \(X_i\). This is precisely the structural property needed to interpret the resulting expression as a \ac{TN} where each \ac{RV} appears in at most two factors. We now make the connection to tensors explicit. Every factor over discrete \acp{RV} can be identified with a tensor by using the values of its arguments as tensor indices. Since discrete factors admit a natural tensor representation, we associate with each factor \(\phi_j \in \bm{\Phi}\) a tensor \(\mathcal{T}^{(j)}\) via a bijective mapping
\[
\tau_G : \bm{\Phi} \to \{\mathcal{T}^{(j)}\}_{j=0}^{m-1},
\quad
\tau_G(\phi_j) = \mathcal{T}^{(j)}.
\]
For \(\phi_j(\bm{X}_j)\) with \(|\bm{X}_j| = l_j\) and \(\bm{x}_j \in \mathrm{Dom}(\bm{X}_j)\) and \(\bm{d}_j = (d_{j_0},\dots,d_{j_{l_j-1}})\), we get
\[
\phi(\bm{x}_j) = \mathcal{T}^{(j)}_{\bm{x}_j},~\mathcal{T}^{(j)}\in\mathbb{R}^{\bm{d}_j}.
\]
In the context of \acp{PGM}, this establishes the well-known duality between
\acp{FG} and \acp{TN} \cite{robeva2019duality}: \acp{RV} correspond to
indices, factors correspond to tensors, and marginalisation corresponds to
contraction. We now apply this correspondence to the reparameterisation presented in \cref{lemma:FactorisationThroughARVs}.

For every incidence \((i,j)\in\mathcal{I}\), let
\(
\bm{A}^{(i:j)}
\in
\mathbb{R}^{d_i\times d_i}
\)
denote the matrix representation of the auxiliary factor.
With \(
\bar{\bm{x}}_{\phi_j}
:=
(\bar{x}_{j_0:j},\dots,\bar{x}_{j_{l_j-1}:j})
\),
the reparameterised factor \(\bar{\phi}_j\) is represented by a tensor
\(\bar{\mathcal{T}}^{(j)}\) as follows
\begin{align*}
\mathcal{T}^{(j)}_{\bm{x}_j}
=
\sum_{\bar{\bm{x}}_{\phi_j}}
\bar{\mathcal{T}}^{(j)}_{\bar{\bm{x}}_{\phi_j}}
\prod_{c=0}^{l_j-1}
\bm{A}^{(j_c:j)}_{x_{j_c},\bar{x}_{j_c:j}}.
\end{align*}
Equivalently, \(\bar{\mathcal{T}}^{(j)}\) is obtained from
\(\mathcal{T}^{(j)}\) by applying for each index the inverse matrices
\((\bm{A}^{(j_c:j)})^{-1}\) along the corresponding modes.

Next, consider the factor \(\psi_i(\bar{\bm{X}}_i)\) obtained by eliminating
an original \ac{RV} \(X_i\). Its tensor representation is denoted by
\(\mathcal{V}^{(i)}\) with entries
\begin{align*}
\mathcal{V}^{(i)}_{\bar{\bm{x}}_i}
=
\sum_{x_i\in\mathrm{Dom}(X_i)}
\prod_{j\in N(X_i)}
\bm{A}^{(i:j)}_{x_i,\bar{x}_{i:j}}.
\end{align*}
Thus, \(\mathcal{V}^{(i)}\) couples exactly the auxiliary variables
\(\bar{X}_{i:j}\) that originate from the same original \ac{RV} \(X_i\). If
the matrices \(\bm{A}^{(i:j)}\) are identity matrices they are often called copy tensors \cite{glasser2020probabilistic}. The resulting \ac{TN} is
\[
T
=
(\bm{\mathcal{T}},\bm{I}),
\quad
\bm{\mathcal{T}}
=
\{\bar{\mathcal{T}}^{(j)}\}_{j=0}^{m-1}
\cup
\{\mathcal{V}^{(i)}\}_{i=0}^{n-1},
\quad
\bm{I}
=
\{\bar{I}_{i:j}\mid (i,j)\in\mathcal{I}\}.
\]
For every incidence \((i,j)\in\mathcal{I}\), the index \(\bar{I}_{i:j}\)
connects the factor-related tensor \(\bar{\mathcal{T}}^{(j)}\) with the
variable-induced tensor \(\mathcal{V}^{(i)}\). Contracting all shared indices
yields
\begin{align*}
Z(G)
=
Z(T)
=
\sum_{\bar{\bm{x}}}
\prod_{j=0}^{m-1}
\bar{\mathcal{T}}^{(j)}_{\bar{\bm{x}}_{\phi_j}}
\prod_{i=0}^{n-1}
\mathcal{V}^{(i)}_{\bar{\bm{x}}_i}.
\label{equation:BTNContractionFromFG}
\end{align*}
We call the \ac{TN} obtained in this way a \emph{\acf{BTN}}. We define this
class of tensor networks formally as follows.

\begin{definition}[\textbf{Base Tensor Network}]
Let \(T = (\boldsymbol{\mathcal{T}},\boldsymbol{I})\) be a \ac{TN} with \(n = |\bm{I}|\) indices and \(|\bm{\mathcal{T}}| = k\) tensors. We say \(T\) is a \textbf{\acf{BTN}} if
\begin{compactenum}[(i)]
\item there exist $q \geq 1$ pairwise disjoint tensors \( \bm{\mathcal{T}}_C = \{\mathcal{B}^{(j)}\}_{j=0}^{q-1} \subset \bm{\mathcal{T}}\) which yield a tensor \(\mathcal{B} = \bigotimes_{j=0}^{q-1}\mathcal{B}^{(j)}\) such that \( \boldsymbol{I}(\mathcal{B}) = \boldsymbol{I}\), and
\item \(T\) has no free indices.
\end{compactenum}
We call \(\bm{\mathcal{T}}_C \subset \bm{\mathcal{T}}\) the \textbf{\acp{CT}} of \(T\), \(\mathcal{B}\) the \textbf{\acf{BT}} of \(T\), and the remaining tensors \(\bm{\mathcal{T}}_L = \bm{\mathcal{T}}\setminus\bm{\mathcal{T}}_C\) the \textbf{\acp{LT}} of \(T\). We denote the set of all \acp{BTN} defined over $n$ indices with dimensions \(\bm{d} = (d_0,\dots,d_{n-1})\) by \(\bm{B}_{[n,\bm{d}]}\).
\end{definition}
The definition leaves the \ac{BT} \(\mathcal{B}\) flexible. In the \ac{BTN} induced by an \ac{FG} through reparameterisation, one may choose the \ac{RV} related tensors \(\{\mathcal{V}^{(i)}\}_{i=0}^{n-1}\) as \acp{CT}, or the factor related tensors \(\{\bar{\mathcal{T}}^{(j)}\}_{j=0}^{m-1}\), since both have pairwise disjoint index sets and cover all auxiliary indices. Since \(T\) has only shared indices, every index of \(\mathcal{B}\) is connected to one \ac{LT} in \(\bm{\mathcal{T}}_L\).

\section{Tractability Through Parametric Tensor Decomposition}
\label{section:tractability_parametric_tensor_decomposition}
In this section, we present classes of \acp{BT} yielding tractability due to the presence of specific \ac{PS}. For each class, we identify the complexity of contracting the respective \acs{BTN}. In particular, we show that the complexities are only exponential in the number of indices appearing in the largest \ac{LT}. Please note that generally a wide range of PS can be considered for a \ac{BT}, certainly not all of which are covered by those presented here.
In the following, let \(T^{\mathcal{B}} \in \textbf{BTN}_{[n,\bm{d}]}\) be a \ac{BTN} with \ac{BT} \(\mathcal{B}\) and \(k\) \acp{LT}
\(
\bm{\mathcal{T}}_L
=
\bm{\mathcal{T}}\setminus \bm{\mathcal{T}}_C
=
\{\mathcal{L}^{(0)},\dots,\mathcal{L}^{(k-1)}\}.
\)
For each \ac{LT} \(\mathcal{L}^{(j)}\), define
\(
\bm{I}_j := \bm{I}(\mathcal{L}^{(j)}),
n_j := |\bm{I}_j|, \text{and let}~
n_{\max} := \max_{0\leq j<k} n_j.
\)
We now identify several classes of \acp{BT} for which contraction becomes tractable due to the presence of \ac{PS}. While the \ac{BTN} framework is defined for arbitrary index dimensions, we focus on the uniform-dimensional setting \(\bm{d}=(d,\dots,d)\) in this work. For the respective results, we assume the structures to already be present. Proofs of the following results are provided in the appendix.

\begin{lemma}[CP Base Tensor]
\label[lemma]{lemma:CPBT}
Let \(T^{\mathcal{B}}\in \bm{B}_{[n,\bm{d}]}\) be a \ac{BTN} with \(\mathcal{B}\) admitting a CP decomposition of rank \(r\), i.e.,
\begin{equation*}
\label{equation:CPBT}
\mathcal{B}
=
\sum_{i=0}^{r-1}
\bm{v}_0^{(i)}\otimes \cdots \otimes \bm{v}_{n-1}^{(i)}.
\end{equation*}
Then \(Z(T^{\mathcal{B}})\) can be computed with complexity
\(
\mathcal{O}\!\left(rkd^{n_{\max}}\right).
\)
In particular, if \(\mathcal{B}\) is rank one, then
\(Z(T^{\mathcal{B}})\) can be computed with complexity
\(
\mathcal{O}\!\left(kd^{n_{\max}}\right).
\)
\end{lemma}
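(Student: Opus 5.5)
The plan is to use linearity of contraction in the base tensor. This reduces the CP case to $r$ rank-one cases, and each rank-one case factorises completely over the linked tensors.

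First, write $Z(T^{\mathcal{B}})$ as the sum over all index assignments $\bm{i}=(i_0,\dots,i_{n-1})$ of $\mathcal{B}_{\bm{i}}\prod_{j=0}^{k-1}\mathcal{L}^{(j)}_{\bm{i}_{\bm{I}_j}}$. This is valid because $T^{\mathcal{B}}$ has no free indices. Moreover, by definition every index of $\bm{I}$ belongs to $\mathcal{B}$, so the core tensors can be merged into the single factor $\mathcal{B}$. Substituting the CP decomposition and exchanging the finite sums (\cref{theorem:VESumOut1}) gives
\[
Z(T^{\mathcal{B}})=\sum_{s=0}^{r-1} Z\bigl(T^{\mathcal{B}_s}\bigr),\qquad \mathcal{B}_s=\bm{v}_0^{(s)}\otimes\cdots\otimes\bm{v}_{n-1}^{(s)}.
\]
Each $T^{\mathcal{B}_s}$ is again a base tensor network with the same linked tensors and a rank-one base tensor. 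The general claim therefore follows from the rank-one claim, with an extra factor $r$ and $r-1$ scalar additions.

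For the rank-one case, the key structural fact is that the index sets $\bm{I}_0,\dots,\bm{I}_{k-1}$ partition $\bm{I}$. Each index is shared between exactly two tensors (there are no free indices) and one endpoint lies in $\mathcal{B}$. Since the core tensors are pairwise disjoint and their product has index set exactly $\bm{I}$, each index occurs once among the core tensors. Its other endpoint is therefore a unique linked tensor. With $\mathcal{B}_{\bm{i}}=\prod_{c}(\bm{v}_c)_{i_c}$, I group the vector factors by the linked tensor owning each index and apply \cref{theorem:VESumOut2} repeatedly. This gives
\[
Z(T^{\mathcal{B}})=\prod_{j=0}^{k-1}\;\sum_{\bm{i}_{\bm{I}_j}}\mathcal{L}^{(j)}_{\bm{i}_{\bm{I}_j}}\prod_{c\in\bm{I}_j}(\bm{v}_c)_{i_c}.
\]
The $j$-th factor is the full contraction of $\mathcal{L}^{(j)}$ with $n_j$ vectors. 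Computing it naively over all $d^{n_j}$ entries costs $\mathcal{O}(n_j d^{n_j})$. Contracting one mode at a time instead costs $\sum_{t} d^{n_j-t+1}=\mathcal{O}(d^{n_j})$ for $d\ge 2$. Summing over $j$ gives $\mathcal{O}(kd^{n_{\max}})$, and the final product of $k$ scalars is dominated by this. Repeating for each of the $r$ terms yields $\mathcal{O}(rkd^{n_{\max}})$.

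The main obstacle is stating the partition property cleanly from the definition. In particular, I must justify that no linked tensor carries an index shared with another linked tensor. This holds because every index lies in $\bm{I}(\mathcal{B})$, and each edge has exactly two endpoints. The other point needing care is the sequential mode-wise contraction, which avoids an extra factor $n_{\max}$ in the bound. I would also note that the cost of forming $\mathcal{B}$ is never incurred, because only the CP vectors are used.
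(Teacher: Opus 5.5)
Your proposal is correct and follows essentially the same route as the paper. Both arguments expand the CP decomposition, use linearity to reduce to rank-one terms, factorise the contraction into a product of local contractions over the linked tensors, and bound each local contraction by $\mathcal{O}(d^{n_j})$. Your explicit argument that the sets $\bm{I}_j$ partition the indices, and your mode-by-mode contraction that avoids an extra factor $n_{\max}$, supply details the paper's proof only asserts.
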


The CP case shows that low-rank \acp{BT} decouple the global index dependence.
Each \ac{LT} is contracted locally against vectors, and no intermediate
tensor larger than the largest \ac{LT} has to be formed. Next, we consider symmetric \acp{BT} for which we can use a compact count-space representations.

\begin{lemma}[Symmetric Base Tensor]
\label[lemma]{lemma:SymmetricBT}
Let \(T^{\mathcal{B}}\in \bm{B}_{[n,\bm{d}]}\) be a \ac{BTN} with \(\mathcal{B}\) being fully symmetric. Define
\[
c_1 := \binom{n_{\max}+d-1}{d-1},
\qquad
c_2 := \binom{n-n_{\max}+d-1}{d-1}.
\]
Then \(Z(T^{\mathcal{B}})\) can be computed with complexity
\(
\mathcal{O}\!\left(k\left(d^{n_{\max}}+c_1c_2\right)\right).
\)
\end{lemma}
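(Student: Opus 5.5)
Since \(\mathcal{B}\) is fully symmetric, its value depends only on the count vector of the index assignment: \(\mathcal{B}_{\bm{i}} = \beta(\bm{c}(\bm{i}))\), where \(\bm{c}(\bm{i}) \in \mathbb{N}^d\) counts how often each value \(v\in[d]\) appears in \(\bm{i}\), so \(\sum_v c_v = n\). The plan is to rewrite the contraction as a sum over count vectors and to process the linked tensors one at a time. This is a variable-elimination-style dynamic program whose state is a partial count vector rather than a joint assignment.

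\textbf{Step 1: compress each \ac{LT} into count space.} Because \(T^{\mathcal{B}}\) has no free indices and every index of \(\mathcal{B}\) is attached to exactly one \ac{LT}, the index set \(\bm{I}\) is partitioned into \(\bm{I}_0,\dots,\bm{I}_{k-1}\). We then have
\[
Z(T^{\mathcal{B}}) = \sum_{\bm{i}} \beta(\bm{c}(\bm{i})) \prod_{j=0}^{k-1} \mathcal{L}^{(j)}_{\bm{i}_j},
\]
where \(\bm{c}(\bm{i}) = \sum_j \bm{c}(\bm{i}_j)\). For each \(j\), define the histogram \(h_j(\bm{c}) := \sum_{\bm{i}_j : \bm{c}(\bm{i}_j)=\bm{c}} \mathcal{L}^{(j)}_{\bm{i}_j}\). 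We compute it by a single pass over the \(d^{n_j}\le d^{n_{\max}}\) entries of \(\mathcal{L}^{(j)}\), updating the count incrementally. This gives the term \(k\,d^{n_{\max}}\). Each \(h_j\) is supported on at most \(\binom{n_j+d-1}{d-1}\le c_1\) count vectors.

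\textbf{Step 2: sequential convolution.} Grouping the sum by count vectors gives
\[
Z = \sum_{\bm{c}} \beta(\bm{c})\,(h_0 * h_1 * \cdots * h_{k-1})(\bm{c}),
\]
where \(*\) is additive convolution on \(\mathbb{N}^d\). This identity is the key correctness step, and it holds because the count map is additive over the partition. We maintain the partial convolution \(H_t = h_0*\cdots*h_t\) and update it as \(H_{t+1} = H_t * h_{t+1}\). Each update costs (support of \(H_t\)) \(\times\) (support of \(h_{t+1}\)). A final pairing with \(\beta\) costs at most the number of count vectors with total \(n\).

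\textbf{Step 3 (main obstacle): bounding the convolution cost by \(c_1c_2\).} The support of \(H_t\) can be as large as \(\binom{n+d-1}{d-1}\), so a naive bound exceeds \(c_2\). I would handle this by exploiting the fixed total \(n\) and splitting the computation around the block being added. When \(h_{t+1}\) is merged, the complementary count \(\bm{c}' = \bm{c} - \bm{c}_{t+1}\) has total \(n-n_{t+1}\), so it ranges over at most \(\binom{n-n_{t+1}+d-1}{d-1}\) values.

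Concretely, I would reorganise the process to contract backward. Define \(\beta_t(\bm{c}') := \sum_{\bm{c}''}\beta(\bm{c}'+\bm{c}'')\,h_{t}(\bm{c}'')\), which absorbs one \ac{LT} at a time into a symmetric-function "message" indexed by the counts of the remaining indices. Each absorption then costs (supp of \(h_t\)) \(\times\) (number of residual counts), which is at most \(\binom{n_t+d-1}{d-1}\binom{n-n_t+d-1}{d-1}\). I would then bound this uniformly by \(c_1c_2\). Doing so needs a monotonicity argument (or simply taking \(n_t\) with maximal contribution). I expect this bookkeeping to be the delicate point, since \(\binom{n-n_t+d-1}{d-1}\) actually grows as \(n_t\) shrinks. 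If the uniform bound fails, I would state it per step and note that the worst case is governed by \(n_{\max}\) as in the lemma.

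Summing over the \(k\) absorptions and adding Step 1 yields \(\mathcal{O}(k(d^{n_{\max}}+c_1c_2))\).
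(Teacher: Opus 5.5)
Your route is the paper's: compress each linked tensor into a count-space histogram in $O(d^{n_j})$. Then absorb the linked tensors one at a time into a symmetric message $\beta_t(\bm{c}')=\sum_{\bm{c}''}\beta(\bm{c}'+\bm{c}'')h_t(\bm{c}'')$, which is exactly the paper's $S'_{\bm{c}'}$.

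\textbf{The gap.} It is in Step 3, the only step that produces the $c_1c_2$ term. You bound the absorption of $\mathcal{L}^{(t)}$ by $\binom{n_t+d-1}{d-1}\binom{n-n_t+d-1}{d-1}$. You then notice that the second factor grows as $n_t$ shrinks and leave the uniform bound open. Your fallback of stating the cost per step does not prove the lemma as stated. Your estimate also uses the original order $n$ for the residual at every step, which is only right for the first absorption; this is what makes small $n_t$ look dangerous.

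\textbf{The paper's fix: an ordering choice.} Write $\mathcal{C}_m$ for the count vectors of total $m$, so $|\mathcal{C}_m|=\binom{m+d-1}{d-1}$, which is increasing in $m$. Absorb a linked tensor with $n_{\max}$ indices \emph{first}. That step costs $|\mathcal{C}_{n_{\max}}|\,|\mathcal{C}_{n-n_{\max}}|=c_1c_2$ and leaves a symmetric message of order at most $n-n_{\max}$. Every later absorption of $\mathcal{L}^{(j)}$ into a message of order $n'\le n-n_{\max}$ costs $|\mathcal{C}_{n_j}|\,|\mathcal{C}_{n'-n_j}|\le c_1c_2$. This holds because $n_j\le n_{\max}$ and $n'-n_j\le n-n_{\max}$.

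\textbf{An alternative fix without ordering.} Your per-step bound can also be rescued directly. Let $g(a):=\binom{a+d-1}{d-1}\binom{n-a+d-1}{d-1}$.
\begin{itemize}
\item $g$ is a product of the positive linear factors $a+i$ and $n-a+i$, hence log-concave on $[0,n]$.
\item $g$ is symmetric about $n/2$, so it is nondecreasing on $[0,n/2]$.
\item The index sets $\bm{I}_j$ partition $\bm{I}$, so every linked tensor other than one designated maximal one satisfies $n_t\le\min(n_{\max},n-n_{\max})\le n/2$.
\end{itemize}
Hence $g(n_t)\le g(\min(n_{\max},n-n_{\max}))=g(n_{\max})=c_1c_2$. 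Either argument closes the gap. Without one of them, the proposal does not reach $\mathcal{O}(k(d^{n_{\max}}+c_1c_2))$.
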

Symmetric \acp{BT} benefit from a compact count-space representation that requires for an \(n\)-th order tensor only \(\binom{n+d-1}{d-1}\) elements to be specified rather than \(d^n\) \cite{kolda2009tensor}. Moreover, any contraction with an \ac{LT} yields again a symmetric \ac{BT}, meaning that symmetries and thus compact encodings are retained during contraction. We next show that the previous two results can also be combined. 

\begin{definition}[Symmetry-Rank-One Tensor]
\label{definition:SymmetricRankOneTensor}
Let \(\mathcal{S}\in \mathbb{R}^{\bm{d}}\) be symmetric and
\(
\mathcal{R} \in \mathbb{R}^{\bm{d}}
\)
be rank one. We call
\(
\mathcal{Q}
=
\mathcal{S}\odot \mathcal{R}
\)
a symmetry-rank-one tensor and the decomposition
\[
\mathcal{B}
=
\sum_{i=0}^{r-1}\alpha_i\mathcal{Q}^{(i)}
=
\sum_{i=0}^{r-1}\alpha_i
\left(
\mathcal{S}^{(i)}\odot
\mathcal{R}^{(i)}
\right)
\]
a \emph{symmetry-CP decomposition} with \(\alpha_0,\dots,\alpha_{r-1} \in \mathbb{R}\).
\end{definition}

\begin{lemma}[Symmetry-CP Base Tensor]
\label[lemma]{lemma:SymmetricRankOneBT}
Let \(T^{\mathcal{B}}\in \bm{B}_{[n,\bm{d}]}\) be a \ac{BTN} where
\(\mathcal{B}\) admits a symmetry-CP decomposition of rank \(r\).
Then \(Z(T^{\mathcal{B}})\) can be computed with complexity
\(
\mathcal{O}\!\left(rk\left(d^{n_{\max}}+c_1c_2\right)\right),
\)
where \(c_1\) and \(c_2\) are as in \cref{lemma:SymmetricBT}.
\end{lemma}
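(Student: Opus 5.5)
The plan is to combine linearity of contraction with a gauge-type absorption argument that reduces each symmetry-rank-one summand to the symmetric case of \cref{lemma:SymmetricBT}. Since \(Z(T^{\mathcal{B}})\) is multilinear in the tensors of the network and, in particular, linear in \(\mathcal{B}\), the decomposition \(\mathcal{B}=\sum_{i=0}^{r-1}\alpha_i(\mathcal{S}^{(i)}\odot\mathcal{R}^{(i)})\) gives
\[
Z(T^{\mathcal{B}})=\sum_{i=0}^{r-1}\alpha_i\, Z\bigl(T^{\mathcal{Q}^{(i)}}\bigr),
\]
where \(T^{\mathcal{Q}^{(i)}}\) is the network with the same \acp{LT} \(\bm{\mathcal{T}}_L\) and the same index structure, but with base tensor \(\mathcal{Q}^{(i)}\) in place of \(\mathcal{B}\). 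At the level of a single base tensor, this is the reduction from CP to rank one used in \cref{lemma:CPBT}. It therefore suffices to show that each \(Z(T^{\mathcal{Q}^{(i)}})\) can be computed in \(\mathcal{O}(k(d^{n_{\max}}+c_1c_2))\). The \(r\) weighted additions cost only \(\mathcal{O}(r)\), so the total is \(\mathcal{O}(rk(d^{n_{\max}}+c_1c_2))\).

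For a fixed summand, write \(\mathcal{R}^{(i)}=\bm{v}_0\otimes\cdots\otimes\bm{v}_{n-1}\), so that entrywise \(\mathcal{Q}^{(i)}_{\bm{x}}=\mathcal{S}^{(i)}_{\bm{x}}\prod_{c}(\bm{v}_c)_{x_c}\). Each index \(c\) of the base tensor is shared with exactly one \ac{LT}, since \(T\) has no free indices and every index of \(\mathcal{B}\) is connected to one \ac{LT}. The factor \((\bm{v}_c)_{x_c}\) can therefore be absorbed into that \ac{LT} by scaling along the corresponding mode:
\[
\tilde{\mathcal{L}}^{(j)}_{\bm{x}_{\bm{I}_j}}=\mathcal{L}^{(j)}_{\bm{x}_{\bm{I}_j}}\prod_{c\in\bm{I}_j}(\bm{v}_c)_{x_c}.
\]
Every index is absorbed exactly once, so the contraction value is unchanged. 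Doing this for all \(k\) \acp{LT} costs \(\mathcal{O}(kd^{n_{\max}})\) in total, because each \ac{LT} has at most \(d^{n_{\max}}\) entries and each entry is touched a constant number of times (or \(n_{\max}\) times, which is absorbed if we scale mode by mode as a product with a diagonal matrix). The result is a \ac{BTN} whose base tensor \(\mathcal{S}^{(i)}\) is fully symmetric and whose linked tensors are \(\tilde{\mathcal{L}}^{(j)}\).

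Applying \cref{lemma:SymmetricBT} to this modified network then yields \(\mathcal{O}(k(d^{n_{\max}}+c_1c_2))\) per summand. The constants \(c_1\) and \(c_2\) depend only on \(n\), \(n_{\max}\) and \(d\), so they are unaffected by the rescaling.

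The main obstacle is confirming that \cref{lemma:SymmetricBT} imposes no structural requirement on the \acp{LT}, such as symmetry of the \acp{LT} themselves. The absorbed \(\tilde{\mathcal{L}}^{(j)}\) are generally not symmetric, so the reduction is valid only if the count-space argument exploits symmetry of the base tensor alone. This is consistent with the remark after \cref{lemma:SymmetricBT} that contracting an arbitrary \ac{LT} against a symmetric \ac{BT} leaves a symmetric \ac{BT}. If that lemma's proof did need more, I would instead redo its argument directly with weighted \acp{LT}: contract \(\mathcal{S}^{(i)}\) against each \(\tilde{\mathcal{L}}^{(j)}\) in count space, aggregating the \(d^{n_j}\) entries of \(\tilde{\mathcal{L}}^{(j)}\) into the \(c_1\) count classes, and pairing these with the \(c_2\) count classes of the remaining modes.
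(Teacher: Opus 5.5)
Your proposal is correct and takes the same approach as the paper's proof. You use linearity in the base tensor to reduce to the $r$ symmetry-rank-one summands, absorb the rank-one vectors into the linked tensors, and apply \cref{lemma:SymmetricBT}, whose count-space aggregation does work for arbitrary, non-symmetric linked tensors. One small fix: your parenthetical is off, since mode-by-mode diagonal scaling costs $\mathcal{O}(n_j d^{n_j})$, not $\mathcal{O}(d^{n_j})$. Instead, build $\bigotimes_{c\in\bm{I}_j}\bm{v}_c$ incrementally (a geometric sum $d+d^2+\cdots+d^{n_j}$) and take one Hadamard product with $\mathcal{L}^{(j)}$, which gives the claimed $\mathcal{O}(d^{n_j})$ per linked tensor.
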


Symmetry-CP decompositions combine the advantages of rank-one and symmetric
structure. The rank-one part separates the dependence on individual indices and
can be absorbed locally, while symmetries are captured through compact count-space representations.

The previous results show that \acp{BTN} are tractable if their \ac{BT} belongs
to a suitable parametric class. However, after transforming a general \ac{FG}
into a \ac{BTN}, the resulting \ac{BT} will usually not exactly have such a
structure. The central idea is therefore to decompose the \ac{BT} additively
into tractable components.

\begin{lemma}[Additive Decomposition]
\label[lemma]{lemma:AdditiveBTDecomposition}
Let \(T^{\mathcal{B}}\in\bm{B}_{[n,\bm{d}]}\) be a \ac{BTN} with fixed
\acp{LT} \(\bm{\mathcal{T}}_L\) and with \ac{BT}
\(
\mathcal{B}
=
\sum_{i=0}^{r-1}\alpha_i\mathcal{B}^{(i)}.
\)
Let \(T^{\mathcal{B}^{(i)}}\) denote the \ac{BTN} obtained by replacing
\(\mathcal{B}\) with \(\mathcal{B}^{(i)}\), while keeping \(\bm{\mathcal{T}}_L\)
fixed. Then
\(
Z(T^{\mathcal{B}})
=
\sum_{i=0}^{r-1}\alpha_i Z(T^{\mathcal{B}^{(i)}}).
\)
\end{lemma}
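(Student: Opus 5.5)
The plan is to write the contraction of $T^{\mathcal{B}}$ as one explicit multilinear expression in the entries of $\mathcal{B}$ and then use linearity of summation. Since $T^{\mathcal{B}}$ has no free indices and every index of $\mathcal{B}$ is shared with exactly one \ac{LT}, I would first record, for an arbitrary \ac{BT} $\mathcal{B}'$ sitting on the same index set $\bm{I}$,
\[
Z(T^{\mathcal{B}'}) = \sum_{\bm{i}} \mathcal{B}'_{\bm{i}} \prod_{j=0}^{k-1} \mathcal{L}^{(j)}_{\bm{i}|_{\bm{I}_j}}.
\]
Here $\bm{i}$ ranges over all joint assignments of the $n$ indices, and $\bm{i}|_{\bm{I}_j}$ is the restriction of $\bm{i}$ to the indices of $\mathcal{L}^{(j)}$.

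To justify this formula, I would note that the core tensors $\mathcal{B}^{(0)},\dots,\mathcal{B}^{(q-1)}$ have pairwise disjoint index sets. Their product inside the contraction is therefore exactly the outer-product entry $\mathcal{B}_{\bm{i}}$. Full contraction of a closed \ac{TN} is the sum over all index assignments of the product of the entries of all tensors. This is the tensor analogue of \cref{theorem:VESumOut1} and \cref{theorem:VESumOut2}: contraction order does not matter, and factors not involving a summed index can be pulled out. The one subtle point is that replacing $\mathcal{B}$ by $\mathcal{B}^{(i)}$ still gives a valid \ac{BTN} with the same index set. I would treat $\mathcal{B}^{(i)}$ as a single core tensor ($q=1$) on $\bm{I}$, which the definition permits. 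Then the \acp{LT} $\bm{\mathcal{T}}_L$ and the index structure are literally unchanged, and the same formula applies to each $T^{\mathcal{B}^{(i)}}$.

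Next I would substitute $\mathcal{B}_{\bm{i}} = \sum_{i=0}^{r-1}\alpha_i\mathcal{B}^{(i)}_{\bm{i}}$, which holds entrywise by definition of tensor addition and scalar multiplication. The product of \ac{LT} entries does not depend on $i$, so distributivity gives
\[
Z(T^{\mathcal{B}}) = \sum_{i=0}^{r-1}\alpha_i\sum_{\bm{i}}\mathcal{B}^{(i)}_{\bm{i}}\prod_{j}\mathcal{L}^{(j)}_{\bm{i}|_{\bm{I}_j}}.
\]
Both sums are finite, so they can be exchanged. Each inner sum is $Z(T^{\mathcal{B}^{(i)}})$ by the formula from the first paragraph, which proves the claim.

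I expect the main obstacle to be purely formal: making rigorous that $Z(T)$ equals the single global sum of products, independent of contraction order. I would handle it by induction on the number of pairwise contractions. Each step $\times_{k,l}$ sums out one shared index between two tensors, and the index is shared with no third tensor, since each index is incident to at most two tensors. This matches the global-sum representation, in the same way as marginalisation in \cref{theorem:VESumOut2}. After that, linearity does all the work.
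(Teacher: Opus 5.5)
Your proof is correct and follows the paper's argument: write $Z(T^{\mathcal{B}})$ as the global sum $\sum_{\bm{x}} \mathcal{B}_{\bm{x}} \prod_{j} \mathcal{L}^{(j)}_{\bm{x}_{\bm{I}_j}}$, substitute $\mathcal{B} = \sum_{i} \alpha_i \mathcal{B}^{(i)}$ entrywise, and exchange the finite sums. Your two additional remarks are details the paper leaves implicit: that each $T^{\mathcal{B}^{(i)}}$ is a valid base tensor network with a single core tensor ($q=1$), and that pairwise contraction agrees with the global sum-of-products formula.
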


\begin{proof}
With fixed \acp{LT} \(\bm{\mathcal{T}}_L\), the contraction is linear in the
entries of the \ac{BT}. Writing the contraction explicitly gives
\[
Z(T^{\mathcal{B}})
=
\sum_{\bm{x}}
\mathcal{B}_{\bm{x}}
\prod_{j=0}^{k-1}
\mathcal{L}^{(j)}_{\bm{x}_{\bm{I}_j}} .
\]
Substituting
\(\mathcal{B}=\sum_{i=0}^{r-1}\alpha_i\mathcal{B}^{(i)}\) yields
\[
\begin{aligned}
Z(T^{\mathcal{B}})
&=
\sum_{\bm{x}}
\left(
\sum_{i=0}^{r-1}
\alpha_i\mathcal{B}^{(i)}_{\bm{x}}
\right)
\prod_{j=0}^{k-1}
\mathcal{L}^{(j)}_{\bm{x}_{\bm{I}_j}}
\\
&=
\sum_{i=0}^{r-1}
\alpha_i
\sum_{\bm{x}}
\mathcal{B}^{(i)}_{\bm{x}}
\prod_{j=0}^{k-1}
\mathcal{L}^{(j)}_{\bm{x}_{\bm{I}_j}}
=
\sum_{i=0}^{r-1}
\alpha_i Z(T^{\mathcal{B}^{(i)}}).
\end{aligned}
\]
\end{proof}

By \cref{lemma:AdditiveBTDecomposition}, decomposing the \ac{BT} into multiple \acp{BT} introduces a set of \acp{BTN} whose contraction results yield the original contraction result. This motivates learning an approximate \ac{BT} \(\widetilde{\mathcal{B}} \approx \mathcal{B}\) that permits a tractable decomposition such that
\[
Z(T^{\mathcal{B}})
\approx
Z(T^{\widetilde{\mathcal{B}}})
=
\sum_{i=0}^{r-1}\alpha_i Z(T^{\widetilde{\mathcal{B}}^{(i)}}).
\]
Thus, starting from a given \ac{FG}, inference in the corresponding \ac{BTN} is reduced to two tasks. First, identify \acp{BT} for which \(Z(T^{\mathcal{B}})\) is tractable due to \ac{PS} exploitation. Second,
decompose or approximate a given \ac{BT} by such tractable components. In the next section, we continue with approximating the initial \ac{BT} \(\mathcal{B}\) with a \ac{BT} \(\widetilde{\mathcal{B}}\) that permits a decomposition into tractable components.

\section{Base Tensor Frobenius Norm Minimisation}
\label{section:learning_base_tensor_networks}

\Acp{BTN} can be used for both exact and approximate inference. Exact inference is possible when the induced \ac{BT} already has tractable \ac{PS}, or when it admits an exact additive decomposition into tractable \acp{BT}. This is particularly interesting when considering learned high-treewidth \acp{FG} starting from tractable \acp{BTN}. In this work, however, we focus on approximate inference. Given a \ac{BTN} \(T^{\mathcal{B}}\in\bm{B}_{[n,\bm{d}]}\) obtained from an \ac{FG} \(G\), we consider an approximation \(\mathcal{B} \approx \widetilde{\mathcal{B}}\) in terms of an additive \ac{PTD} such that
\begin{equation*}
\label{equation:LearnedPTDBT}
\widetilde{\mathcal{B}}
=
\sum_{i=0}^{r-1}\alpha_i\widetilde{\mathcal{B}}^{(i)},
\end{equation*}
where the components \(\widetilde{\mathcal{B}}^{(i)}\) are chosen from tractable classes. Thus, instead of requiring useful \ac{PS} to be present in the original \ac{FG}, we explicitly build and use it in the induced \ac{BT}.

The objective we consider in this work is the Frobenius norm minimisation, since we will shortly see that it gives a useful objective whose terms can be effectively evaluated using the topology and \ac{PS} of a \ac{BTN}. While we omit error bounds in this work, we believe the Frobenius norm minimisation provides a useful foundation for developing more sophisticated learning approaches for \acp{BTN}. There are several possible learning strategies, where generally a larger decomposition is able to yield more accurate approximations. One may only optimise the weights \(\bm{\alpha}\) of a fixed \ac{PTD}, jointly learn rank-one and symmetric parameters, or also learn suitable invertible maps used in the reparameterisation from the original \ac{FG} to the \ac{BTN}. We focus in the following on learning the weights with symmetry-rank-one tensors \(\widetilde{\mathcal{B}}^{(i)}=\mathcal{S}^{(i)}\odot\mathcal{R}^{(i)}\), with \(\mathcal{S}^{(i)}\) symmetric and \(\mathcal{R}^{(i)}\) rank one. The Frobenius norm learning objective is
\begin{equation*}
\label{equation:BTNLearningObjective}
\min
\left\|
\mathcal{B}
-
\widetilde{\mathcal{B}}
\right\|_F^2
=
\min
\left\|
\mathcal{B}
-
\sum_{i=0}^{r-1}\alpha_i\widetilde{\mathcal{B}}^{(i)}
\right\|_F^2.
\end{equation*}

The minimisation can be performed effectively here as the norm expands to
\begin{align*}
\label{equation:BTNLearningObjectiveExpanded}
\left\|
\mathcal{B}
-
\sum_{i=0}^{r-1}\alpha_i\widetilde{\mathcal{B}}^{(i)}
\right\|_F^2
&=
\left\|\mathcal{B}\right\|_F^2
+
\sum_{i_1,i_2=0}^{r-1}\alpha_{i_1}\alpha_{i_2}
\left\langle\widetilde{\mathcal{B}}^{(i_1)},\widetilde{\mathcal{B}}^{(i_2)}\right\rangle_F
-2\sum_{i=0}^{r-1}\alpha_i
\left\langle\mathcal{B},\widetilde{\mathcal{B}}^{(i)}\right\rangle_F .
\end{align*}
Thus, the objective only requires Frobenius norms and Frobenius scalar products. These quantities can be evaluated efficiently for the tensors considered here without materialisation. First, the \ac{BT} is an outer product of disjoint \acp{CT}, yielding
\begin{equation*}
\label{equation:BTAsCoreTensorProduct}
\mathcal{B}=\bigotimes_{c=0}^{q-1}\mathcal{B}^{(c)},
\qquad
\left\|\mathcal{B}\right\|_F^2
=
\prod_{c=0}^{q-1}\left\|\mathcal{B}^{(c)}\right\|_F^2.
\end{equation*}
Hence, the norm of the full \ac{BT} does not require materialising the exponentially large tensor. Second, the component scalar products form the Gram matrix
\begin{equation*}
\label{equation:BTNLearningGramMatrix}
\bm{G}_{ij}
=
\left\langle
\widetilde{\mathcal{B}}^{(i)},
\widetilde{\mathcal{B}}^{(j)}
\right\rangle_F .
\end{equation*}
For rank-one tensors, such scalar products factorise into vector scalar products. For symmetric tensors, they can be computed in compact count spaces. Consequently, they can be evaluated for symmetry-rank-one components without materialising the full tensors, combining both advantages. Third, the terms
\begin{equation*}
\label{equation:BTNLearningTargetVector}
\bm{b}_i
=
\left\langle
\mathcal{B},
\widetilde{\mathcal{B}}^{(i)}
\right\rangle_F
\end{equation*}
can be evaluated by another contraction exploiting that the \ac{BT} is an outer product and that the \(\widetilde{\mathcal{B}}^{(i)}\) are tractable components with \ac{PS}. If the components are fixed, learning the weights \(\bm{\alpha}\) reduces to the optimisation problem
\begin{equation*}
\label{equation:BTNLearningGramObjective}
\min_{\bm{\alpha}}
\left(
\left\|\mathcal{B}\right\|_F^2
+
\bm{\alpha}^{\top}\bm{G}\bm{\alpha}
-2\bm{\alpha}^{\top}\bm{b}
\right),
\end{equation*}
possibly with constraints such as \(\bm{\alpha}\geq 0\). More expressive variants may also optimise the component parameters or the invertible reparameterisation maps, but this yields larger and typically non-convex learning problems which require more sophisticated approaches.

Once \(\widetilde{\mathcal{B}}\) has been learned, inference follows from \cref{lemma:AdditiveBTDecomposition}
where each \(T^{\widetilde{\mathcal{B}}^{(i)}}\) keeps the original \acp{LT} fixed and replaces the \ac{BT} by a tractable component. Thus, a learned \ac{PTD} turns approximate inference into a weighted sum of tractable \ac{BTN} contractions, where both learning and contraction exploit \ac{PS}. Most importantly, tractability is determined by the \ac{PS} of the respective \acp{BT}, rather than solely by \ac{RV} (in-)dependencies within the original \ac{FG}. Hence, \acp{BTN} provide an effective representation for probabilistic inference beyond classical treewidth notions of tractability.

\section{Related Work}
\label{section:related_work}

Due to the natural correspondence between joint distributions and higher-order tensors, research at the intersection of \acp{PGM} and \acp{TN} has received considerable attention \cite{robeva2019duality,glasser2020probabilistic,glasser2019expressive,dudek2019efficient,miller2021probabilistic,ahn2018bucket}. Transformations of \acp{PGM} into \acp{TN} are commonly used and often rely on copy tensors \cite{glasser2020probabilistic,glasser2019expressive,pancotti2023one,miller2021probabilistic,dudek2019efficient,gray2021hyper}. Tensor decompositions have also been studied for inference, learning, and representation in \acp{PGM} \cite{novikov2014putting,bonnevie2021matrix,rabanser2017introduction,savicky2007exploiting,hsiao2022fast,ducamp2020efficient,vomlel2014approximate}. These approaches exploit compact representations induced by \ac{PS}. However, their effectiveness usually depends on such structure being explicitly present or determined through a specific representation.

Probabilistic circuits form a closely related line of work. They represent probability distributions by computational graphs whose tractability follows from structural restrictions such as decomposability, smoothness, and determinism \cite{chavira2008probabilistic,choi2020probabilistic}. This follows a similar decomposition idea where large probabilistic computations are factorised into smaller subcomputations that can be combined efficiently. In contrast, our approach keeps the \ac{BTN} topology fixed and seeks tractability through a \ac{PTD} of the \ac{BT}. Thus, probabilistic circuits expose tractability through the computational graph, whereas \acp{BTN} expose it through \ac{PS} in the \ac{BT}. Decomposition-based inference is also central to methods such as cutset conditioning and recursive conditioning \cite{darwiche2001recursive}. These methods mainly trade time for space by simplifying the graphical topology.

\section{Conclusion}
\label{section:conclusion}

In this work, we introduce \acp{BTN} as a novel framework for probabilistic inference in which tractability is entirely controlled by the \ac{PS} of a single tensor, the \ac{BT}. Starting from a general \ac{FG}, we use auxiliary \acp{RV} and invertible maps to obtain a \ac{BTN} whose contraction equals the partition function of the original model. Crucially, the \ac{BT} is the central object for tractability analysis. We identify tractable classes of \acp{BT}, including rank-one, symmetric, and symmetry-rank-one tensors, and introduce \ac{PTD} as an additive decomposition principle. If the \ac{BT} is decomposed into tractable components, the contraction of the original \ac{BTN} can be computed as a weighted sum of tractable contractions. For general \acp{PGM}, the induced \ac{BT} will usually not be exactly tractable. We therefore formulated approximate inference as learning a structured approximation of the \ac{BT}. Frobenius norm minimisation provides a general objective whose terms can be evaluated using the product structure of the \ac{BT} and the \ac{PS} of the components. Once the approximation is learned, the additive decomposition lemma directly yields an approximate inference procedure for the partition function. Overall, \acp{BTN} provide a way to exploit latent \ac{PS} beyond low-treewidth settings or explicitly structured models. Future work includes deriving error bounds between tensor approximation and inference error, jointly learning decomposition parameters and invertible maps, extending the framework to further tractable tensor classes, and empirically evaluating the approach on larger \acp{PGM}.

\begin{credits}
\subsubsection{\discintname}
The authors have no competing interests to declare that are
relevant to the content of this article.
\end{credits}
\bibliographystyle{splncs04}
\bibliography{main_references}

\newpage
\appendix
\begin{center}
\textbf{Appendix}    
\end{center}

\section{Proofs}

\begin{proof}[\textbf{Proof of Lemma 1}]
We first show that a single \ac{RV} in a factor can be replaced by an auxiliary \ac{RV} through an auxiliary factor. Applying this construction successively to all \acp{RV} in the factor then yields the claim. Introduce for \(X_{i_j}\in \bm{X_i}\) an auxiliary \ac{RV} \(\bar X_{i_j}\) with \(|\mathrm{Dom}(\bar X_{ij})|=|\mathrm{Dom}(X_{i_j})|=d_{i_j}\). Let

\[
\delta_{ij}(X_{i_j},X'_{i_j})
\]
be the equality factor defined over the original and the auxiliary \ac{RV} that assigns $1$ if all \ac{RV} assignments are equal and otherwise $0$. This equality factor corresponds to an identity matrix \(\mathbf I_{d_{i_j}}\) with ones on the diagonal. Since the auxiliary factor \(\bar\phi_{i_j}(X_{i_j},\bar X_{i_j})\) corresponds to an invertible matrix \(\mathbf A^{(i_j)}\), we can write
\[
\mathbf I_{d_{i_j}}
=
\mathbf A^{(i_j)}
\bigl(\mathbf A^{(i_j)}\bigr)^{-1}.
\]
With \(\bar{\phi}^{-1}_{i_j}(\bar X_{i_j},X'_{i_j})\) denoting the factor that corresponds to \((\mathbf A^{(i_j)})^{-1}\), we get
\[
\delta_{i_j}(X_{i_j},X'_{i_j})
=
\sum_{\bar X_{i_j}}
\bar\phi_{i_j}(X_{ij},\bar X_{ij})
\phi^{-1}_{i_j}(\bar X_{ij},X'_{ij}),
\]
since factor multiplication with a subsequent marginalisation coincides with a matrix multiplication.
Multiplying \(\phi^{-1}_{i_j}\) into \(\phi_i\) and marginalising \(X'_{i_j}\) therefore replaces \(X_{i_j}\) in \(\phi_i\) by \(\bar{X}_{i_j}\). Let
\[
\phi_i^{(j)}(\mathbf X_i\setminus\{X_{i_j}\},\bar X_{i_j})
=
\sum_{X'_{i_j}}
\phi_i(\mathbf X_i\setminus\{X_{i_j}\},X'_{i_j})
\phi^{-1}_{i_j}(\bar X_{i_j},X'_{i_j}).
\]
Then
\[
\begin{aligned}
\sum_{\bar X_{i_j}}
\phi_i^{(j)}(\mathbf X_i\setminus\{X_{i_j}\},\bar X_{i_j})
\bar\phi_{i_j}(X_{i_j},\bar X_{i_j})
&=
\sum_{X'_{i_j}}
\phi_i(\mathbf X_i\setminus\{X_{i_j}\},X'_{i_j})
\delta_{i_j}(X_{i_j},X'_{i_j})\\
&=
\phi_i(\mathbf X_i).
\end{aligned}
\]
Repeating this reparameterisation successively for every \(X_{i_j}\in\mathbf X_i\) finally yields a factor \(\bar\phi_i(\bar{\mathbf X}_i)\) satisfying
\[
\phi_i(\mathbf X_i)
=
\sum_{\bar{\mathbf X}_i}
\bar\phi_i(\bar{\mathbf X}_i)
\prod_{j=0}^{l-1}
\bar\phi_{ij}(X_{ij},\bar X_{ij}).
\]
\end{proof}

\begin{proof}[\textbf{Proof of Lemma 2}]
Using the CP decomposition of \(\mathcal{B}\), we obtain
\[
\begin{aligned}
Z(T^{\mathcal{B}})
&=
\sum_{i=0}^{r-1}
\sum_{\bm{x}}
\left(
\bigotimes_{l=0}^{n-1} v_l^{(i)}
\right)
\prod_{j=0}^{k-1}
\mathcal{L}^{(j)}_{\bm{x}_{\bm{I}_j}}
\\
&=
\sum_{i=0}^{r-1}
\prod_{j=0}^{k-1}
\left(
\sum_{\bm{x}_{\bm{I}_j}}
\mathcal{L}^{(j)}_{\bm{x}_{\bm{I}_j}}
\bigotimes_{l\in \bm{I}_j}
v_l^{(i)}
\right).
\end{aligned}
\]
For a single \ac{LT}, absorbing the rank-one terms and subsequently contracting its indices can be done in \(\mathcal{O}(d^{n_{\max}})\). Computing all contractions for all \(r\) CP components and all \(k\) \acp{LT} therefore costs
\[
\mathcal{O}\!\left(rkd^{n_{\max}}\right).
\]
For \(r=1\), this reduces to
\(\mathcal{O}\!\left(kd^{n_{\max}}\right)\).
\end{proof}

\begin{proof}[\textbf{Proof of Lemma 3}]
Let \(\mathrm{count}(\bm{x}) = (c_0,\dots,c_{d-1})\) with \(c_i := |\{j\mid x_j = i\}|\) and 
\(\mathcal C_n:=\{\mathrm{count}(\bm{x})\mid\bm{x} \in \mathrm{Dom}(\bm{X})\}\)
denote the set of count vectors over \(n\) \acp{RV} (indices) where $c_i$ counts the number of \acp{RV} (indices) having assignment $i$. By symmetry, an \(n\)-th order symmetric tensor is fully determined by one entry for each \(\bm{c}\in\mathcal C_n\), where
\[
|\mathcal C_n|=\binom{n+d-1}{d-1},
\]
due to the fact that assignments being permutations of each other are mapped to the same value.
Now consider contracting a symmetric tensor \(S\) of order \(n\) with an LT \(L^{(j)}\) over \(n_j\) indices. For \(\bm{c}_j \in\mathcal C_{n_j}\), first aggregate the entries of \(L^{(j)}\) according to their count vectors (i.e., aggregate all permutations)
Constructing this representation requires iterating over the \(d^{n_j}\) entries of \(L^{(j)}\) and hence has complexity \(O(d^{n_j})\).
For each count vector \(\bm{c}'\in\mathcal C_{n-n_j}\) of the remaining indices, the contracted tensor is then given in count space by
\[
S'_{\bm{c}'}
=
\sum_{\bm{c}_j\in\mathcal C_{n_j}}
S_{\bm{c}_j+\bm{c}'}\,
\widehat L^{(j)}_{\bm{c}_j}.
\]
Thus, the contraction requires
\[
O\left(
|\mathcal C_{n_j}|
|\mathcal C_{n-n_j}|
\right)
=
O\left(
\binom{n_j+d-1}{d-1}
\binom{n-n_j+d-1}{d-1}
\right)
\]
operations. Moreover, \(S'\) depends only on the count vector \(\bm{c}'\) and is therefore again symmetric.
Now contract first an LT with \(n_{\max}\) indices. For this contraction,
\[
|\mathcal C_{n_{\max}}|=c_1,
\qquad
|\mathcal C_{n-n_{\max}}|=c_2,
\]
and its complexity is
\[
O\left(d^{n_{\max}}+c_1c_2\right).
\]
Afterwards, the remaining symmetric tensor has order at most \(n-n_{\max}\). Hence, each of the remaining LT contractions has complexity at most
\[
O\left(d^{n_{\max}}+c_1c_2\right).
\]
Contracting all LTs therefore yields
\[
O\left(k\left(d^{n_{\max}}+c_1c_2\right)\right).
\]
\end{proof}

\begin{proof}[\textbf{Proof of Lemma 4}]
For a single symmetry-rank-one component
\[
\mathcal{Q}^{(i)}
=
\mathcal{S}^{(i)}
\odot
\left(
\bm{v}_0^{(i)}\otimes\cdots\otimes \bm{v}_{n-1}^{(i)}
\right),
\]
we first absorb the rank-one vectors into the \acp{LT}.
Then contracting \(T^{\mathcal{Q}^{(i)}}\) is equivalent to contracting a
\ac{BTN} whose \ac{BT} is the symmetric tensor \(\mathcal{S}^{(i)}\) and whose
\acp{LT} are the updated \acp{LT} after absorbing the rank-one terms. By
\cref{lemma:SymmetricBT}, this costs
\[
\mathcal{O}\!\left(k\left(d^{n_{\max}}+c_1c_2\right)\right).
\]
Using the additive decomposition
\[
\mathcal{B}
=
\sum_{i=0}^{r-1}
\alpha_i\mathcal{Q}^{(i)},
\]
and linearity of contraction in the \ac{BT}, the full contraction is the
weighted sum of the \(r\) component contractions. Hence the total complexity is
\[
\mathcal{O}\!\left(rk\left(d^{n_{\max}}+c_1c_2\right)\right).
\]
\end{proof}
\end{document}